\def\eqref#1{equation~\ref{#1}}
\def\1{\bm{1}}
\def\vh{{\bm{h}}}
\def\vv{{\bm{v}}}
\def\vx{{\bm{x}}}
\def\vy{{\bm{y}}}
\def\vz{{\bm{z}}}
\def\mI{{\bm{I}}}
\def\mL{{\bm{L}}}
\def\mM{{\bm{M}}}
\def\mR{{\bm{R}}}
\def\mU{{\bm{U}}}
\def\mV{{\bm{V}}}
\def\mW{{\bm{W}}}
\def\mSigma{{\bm{\Sigma}}}
\DeclareMathAlphabet{\mathsfit}{\encodingdefault}{\sfdefault}{m}{sl}
\SetMathAlphabet{\mathsfit}{bold}{\encodingdefault}{\sfdefault}{bx}{n}
\def\sD{{\mathbb{D}}}
\def\sR{{\mathbb{R}}}
\def\sS{{\mathbb{S}}}
\DeclareMathOperator*{\argmin}{arg\,min}
\newcolumntype{P}[1]{>{\centering\arraybackslash}p{#1}}
\newtheorem{remark}{Remark}
\newtheorem{definition}{Definition}
\icmltitlerunning{Large Scale Private Learning via Low-rank Reparametrization}
\begin{document}

\twocolumn[
\icmltitle{Large Scale Private Learning via Low-rank Reparametrization}




\begin{icmlauthorlist}
\icmlauthor{Da Yu}{to,goo}
\icmlauthor{Huishuai Zhang}{goo}
\icmlauthor{Wei Chen}{goo}
\icmlauthor{Jian Yin}{to}
\icmlauthor{Tie-Yan Liu}{goo}
\end{icmlauthorlist}

\icmlaffiliation{to}{The School of Data and Computer Science \& Guangdong Key Laboratory of Big Data Analysis and Processing, Sun Yat-sen University, Guangdong, China. The work was done when D. Yu was an intern at Microsoft Research Asia.}
\icmlaffiliation{goo}{Microsoft Research Asia, Beijing, China}

\icmlcorrespondingauthor{Wei Chen}{wche@microsoft.com}
\icmlcorrespondingauthor{Jian Yin}{issjyin@mail.sysu.edu.cn}

\icmlkeywords{Machine Learning, ICML}

\vskip 0.3in
]



\printAffiliationsAndNotice{}  

\begin{abstract}

We propose a reparametrization scheme to address the challenges of applying differentially private SGD  on large neural networks, which are 1) the huge memory cost of storing individual gradients, 2)  the added noise suffering notorious dimensional dependence.  Specifically, we reparametrize each weight matrix with two \emph{gradient-carrier} matrices  of small dimension and a \emph{residual weight} matrix. We argue that such reparametrization keeps the forward/backward process unchanged while enabling us to compute the  projected gradient  without computing the gradient itself. To learn with differential privacy, we design \emph{reparametrized gradient perturbation (RGP)} that perturbs the gradients on gradient-carrier matrices and reconstructs an update for the original weight from the noisy gradients. Importantly, we use historical updates to find the gradient-carrier matrices, whose optimality is rigorously justified under linear regression and empirically verified with deep learning tasks. RGP significantly reduces the memory cost and improves the utility. For example, we are the first able to apply 
differential privacy on the BERT model and achieve an average accuracy of $83.9\%$ on four downstream tasks  with $\epsilon=8$, which is within $5\%$ loss compared to the non-private baseline but enjoys much lower privacy leakage risk. 

\end{abstract}

\section{Introduction}

A recent line of works \citep{shokri2017membership,carlini2019secret,carlini2020extracting} have exposed the potential privacy risks of trained models, e.g., data extraction from language model. Theoretically, learning with  \emph{differential privacy} \citep{dwork2006calibrating} is guaranteed to prevent such information leakage because differential privacy imposes an upper bound on the influence of any individual sample. Empirically, differential privacy also makes learning more resistant to attacks \citep{rahman2018membership,bernau2019assessing, zhu2019deep, carlini2019secret, ma2019data,lecuyer2019certified}.

To learn with differential privacy, many  algorithms have been proposed under different settings over the past decade, e.g.,  \citet{chaudhuri2009privacy,song2013stochastic,agarwal2018cpsgd,wang02019differentially,wang2019differentially,yu2020gradient,phan2020scalable,vietri2020private}, to name a few. Among them, \emph{gradient perturbation} is a popular choice because of its simplicity and wide applicability \cite{abadi2016deep}. In terms of simplicity, gradient perturbation only makes two simple modifications to the standard learning process. It  first clips the gradients of individual samples, referred to as individual gradients, to bound the sensitivity and then perturbs the aggregated gradient with random noise. In terms of wide applicability, it does not assume the objective to be convex and hence applies to deep  neural networks.

  \begin{figure}
    \centering
  \includegraphics[width=1.0\linewidth]{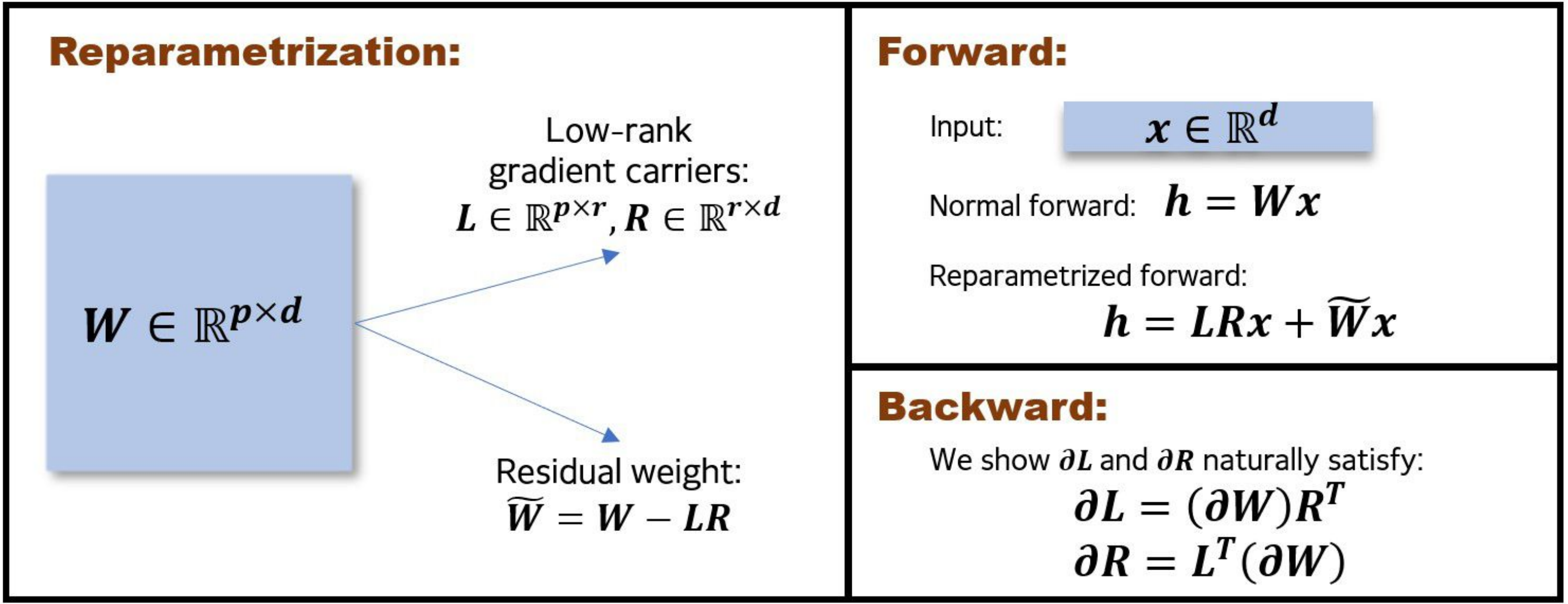}
  \caption{The proposed  reparametrization scheme.  The residual weight makes the reparametrized output the same as the normal output and  $\partial\mL$, $\partial\mR$ naturally connected with the normal gradient.    }
  \label{fig:repara}
\end{figure}

Despite its advantages, there are two challenges when applying gradient perturbation to cutting-edge deep models. First, one needs to compute and store individual gradients. Recent works \citep{dangel2019backpack,Opacus} have developed toolkits to compute individual gradients for a mini-batch of data through a single forward/backward pass, but storing individual gradients consumes a huge amount of memory as each individual gradient requires the same amount of memory as the model itself. Second, both theoretical and empirical utilities of gradient perturbation suffer from bad dependence on the model size \citep{bassily2014differentially, papernot2020tempered,tramer2021differentially} because the intensity of the added noise scales proportionally with the model size.

To tackle these challenges, we reparameterize each weight matrix $\mW$ of a deep neural network with a pair of low-rank \emph{gradient carriers} $\{\mL,\mR\}$ and a \emph{residual weight} $\tilde{\mW}$, as illustrated in Figure~\ref{fig:repara}. With this reparametrization, the forward signal and the  backward signal propagate the same as before. We show that the gradients on $\mL$ and $\mR$ are naturally connected with the  gradient on $\mW$.  Especially if the gradient carriers consist of orthonormal vectors,  we can construct a projection of the gradient of $\mW$ from the gradients of $\mL$ and $\mR$ that are of low dimension.  In other words, we can compute the projection of the gradient without computing the gradient itself. This property could save a huge amount of memory in DP-SGD where a large batch of individual gradients are  computed and stored.  We note that this could be also useful in other problems involving statistics of individual gradients, e.g. computing the gradient variance \citep{zhao2015stochastic,balles2016coupling,mahsereci2017probabilistic,balles2018dissecting}, which is out of our scope.

Based on the above framework, we propose \emph{reparametrized gradient perturbation (RGP)} for differentially private learning.  Specifically, after the backward process, RGP clips and perturbs the gradients of $\mL$ and $\mR$, which gives a certain level of privacy guarantee. Then RGP uses the noisy gradients to construct an update for the original weight. We note that because the gradient-carrier matrices are of much smaller dimension than the original weight matrix, the total intensity of the added noises is significantly smaller, which helps us break the notorious dimensional dependence of the utility of differentially private learning.

The key of the reparameterization scheme is how well the gradient projection approximates the original gradient. We argue that the approximation is good if 1) the original gradient of $\mW$ itself is indeed low-rank and 2) its principal subspace aligns with $\mL$ and $\mR$. The first condition is empirically verified by showing the gradient of each layer is of low stable rank when training deep neural networks, which has also been exploited for gradient compression in distributed optimization \citep{vogels2019powersgd}. The second condition is guaranteed if $\mL$ and $\mR$ consists of the principal singular vectors of the original gradient, which, however, violates the differential privacy. Instead, in RGP, we approximately compute a few of principal vectors of the historical updates that are already published and free to use because of the post-processing property of differential privacy, and use them as gradient carriers. We theoretically prove that the optimality of using the historical update substitution for linear regression and empirically verify its efficacy for deep neural networks.

With RGP, we can easily train large models with differential privacy and achieve good utility on both the vision and language modeling tasks. For example, we use RGP to train the BERT model \citep{devlin2018bert} on downstream language understanding tasks. We establish rigorous differential privacy guarantee for such large model with a modest drop in accuracy. With a privacy budget $\epsilon=8$, we achieve an average accuracy $83.9\%$ on downstream tasks, which is within $5\%$ loss compared to the non-private baseline. We also use \emph{membership inference attack} \citep{shokri2017membership,sablayrolles2019white} to evaluate the empirical privacy risks and demonstrate that the models trained with RGP are significantly more robust to membership inference attack than the non-private ones.
 Overall, our contribution can be summarized as follows.
\begin{enumerate}[itemsep=0mm]
\item We propose reparametrized gradient perturbation (RGP) that reduces the memory cost and improves the utility when applying DP on large models.

\item We give a detailed analysis on the property of RGP. We propose using the historical update to find the principal subspace and give theoretical arguments.

\item Empirically we are able to efficiently train BERT with differential privacy on downstream tasks, and achieve both good accuracy and privacy protection.
\end{enumerate}

\subsection{Notations}

We introduce some basic notations. Vectors and matrices are denoted with bold lowercase letters, e.g., $\vv$, and bold capital letters, e.g., $\mM$, respectively. Sets are denoted with double-struck capital letters, e.g., $\sS$. We use $[n]$ to denote the set of positive numbers $\{1,...,n\}$. Some preliminaries on differential privacy are presented in Appendix \ref{app:sec:preliminary}.

\section{A Reparametrization Scheme}\label{sec:lrk}

In this section, we introduce a reparametrization scheme for the neural network weight matrices so that computing and storing individual gradients are affordable for large models. Specifically,  during each forward/backward process, for a layer with weight matrix $\mW\in \sR^{p\times d}$,  we reparametrize it as follows (see  Figure~\ref{fig:repara} for an illustration),
\begin{flalign} 
\mW \rightarrow \mL \mR + \tilde{\mW}.{stop\_gradient()}, \label{eq:repara}
\end{flalign}
where $\mL\in\sR^{p\times r}, \mR\in\sR^{r\times d}$ are two low-rank gradient carriers  with $r\ll p \text{ or } d$, $\tilde{\mW} = \mW-\mL\mR$ represents the residual weight and $.{stop\_gradient()}$ means that we do not collect the gradient on $\tilde{\mW}$. 
Hence, such reparametrization does not change the forward signal and the backward signal, but only changes the gradient computation. Now we obtain the gradients on $\mL$ and $\mR$. We then unveil the connection between the gradient on $\mW$ and the gradients on $\mL$ and $\mR$.

\begin{restatable}{theorem}{gradlr}\label{thm:grad_lr}
For a layer with weight matrix $\mW$, suppose that $\partial\mW$ is the gradient computed by back-propagation with a mini-batch data $\sD$. Given two matrices $\mL, \mR$, we reparametrize $\mW$ as in Eq~(\ref{eq:repara}) and compute the gradients $\partial\mL$ and  $\partial\mR$  by running the forward and backward process with the same mini-batch  $\sD$, then 
\begin{flalign}
\partial \mL = (\partial\mW)\mR^{T},\;\; \partial \mR = \mL^{T}(\partial\mW).
\end{flalign}

\end{restatable}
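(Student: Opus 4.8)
The plan is to reduce the claim to the two standard backpropagation identities for a matrix product, together with the observation that the reparametrization leaves the layer's output untouched. Write $\mH$ for the (mini-batched) input activations fed to the layer, so that in the original parametrization the layer computes $\mY = \mW\mH$, and let $\partial\mY$ denote the gradient of the loss with respect to $\mY$ that is propagated back from the rest of the network. The key fact I would establish first is that $\partial\mY$ is identical in the reparametrized model. Since $\tilde{\mW} = \mW - \mL\mR$, the effective weight satisfies $\mL\mR + \tilde{\mW} = \mW$, so the reparametrized layer produces exactly the same output $\mY$. Because everything downstream, including the loss, is a function of $\mY$ alone, the incoming backward signal $\partial\mY$ is unchanged. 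This is precisely the ``forward/backward process unchanged'' property advertised in the text, and it is what lets me relate $\partial\mL,\partial\mR$ to the true $\partial\mW$ rather than to some altered quantity.

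With $\partial\mY$ fixed, I would factor the reparametrized forward map into two elementary matrix multiplications through an intermediate variable $\mZ := \mR\mH$, so that $\mY = \mL\mZ + \tilde{\mW}\mH$, where the second term carries no gradient to $\mL$ or $\mR$ because $\tilde{\mW}$ is frozen by the stop-gradient operation. Applying the matmul backprop rule to $\mY = \mL\mZ + \mathrm{const}$ gives $\partial\mL = (\partial\mY)\mZ^{T}$ together with the downstream gradient at the intermediate, $\partial\mZ = \mL^{T}(\partial\mY)$. Applying it a second time to $\mZ = \mR\mH$ gives $\partial\mR = (\partial\mZ)\mH^{T}$. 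Substituting $\mZ = \mR\mH$ into the first and $\partial\mZ = \mL^{T}(\partial\mY)$ into the second yields $\partial\mL = (\partial\mY)\mH^{T}\mR^{T}$ and $\partial\mR = \mL^{T}(\partial\mY)\mH^{T}$.

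To finish, I would invoke the same backprop identity for the original layer, which reads $\partial\mW = (\partial\mY)\mH^{T}$, and recognize this block inside both expressions, obtaining $\partial\mL = (\partial\mW)\mR^{T}$ and $\partial\mR = \mL^{T}(\partial\mW)$, exactly as claimed. A quick dimension check confirms consistency: $\partial\mW\in\sR^{p\times d}$ and $\mR^{T}\in\sR^{d\times r}$ give $\partial\mL\in\sR^{p\times r}$, while $\mL^{T}\in\sR^{r\times p}$ gives $\partial\mR\in\sR^{r\times d}$, matching the shapes of $\mL$ and $\mR$. For a mini-batch, $\mH$ and $\partial\mY$ are matrices whose columns index the samples, and every identity above is linear in the per-sample outer products, so summing over the batch preserves them verbatim and no separate argument is needed.

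The step I expect to require the most care is not the algebra but the justification that $\partial\mY$ genuinely coincides between the two models. I would make this rigorous by noting that the map from $\mY$ to the scalar loss is literally the same function in both cases, so $\partial\mY = \partial\,\mathrm{loss}/\partial\mY$ depends only on $\mY$, and $\mY$ itself is unchanged by the identity $\mL\mR + \tilde{\mW} = \mW$; the residual $\tilde{\mW}$ does enter the forward pass but, being frozen, never contributes to the gradients of $\mL$ or $\mR$. Everything else is a direct application of the chain rule, optionally preceded by a one-line rederivation of the matmul identities $\partial\mW=(\partial\mY)\mH^{T}$ and $\partial\mH=\mW^{T}(\partial\mY)$ from $\mY=\mW\mH$ for readers who want the argument self-contained.
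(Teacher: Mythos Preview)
Your proposal is correct and follows essentially the same approach as the paper: both arguments rely on the fact that the reparametrization leaves the forward/backward signals unchanged, then apply the chain rule to the matrix product to read off $\partial\mL$ and $\partial\mR$ and recognize $\partial\mW=(\partial\mY)\mH^{T}$ inside each. The only cosmetic difference is that you work with the batched matrices $\mH,\mY$ and an explicit intermediate $\mZ=\mR\mH$, whereas the paper writes everything as a sum over per-sample outer products $(\partial\vh_i)\vx_i^{T}$ and invokes associativity directly.
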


Based on the above understanding, we can construct an update for $\mW$ by using $\partial\mL$ and $\partial\mR$. 

\begin{restatable}{corollary}{corogradlr}\label{corollary:grad_lr}
If the columns of $\mL$ and the rows of $\mR$ are orthonormal, respectively, and we use 
\begin{flalign}
\label{eq:grad_lrk}
(\partial \mL) \mR + \mL(\partial\mR) - \mL\mL^{T}(\partial\mL)\mR,
\end{flalign}
as the update for $\mW$, then the update is equivalent to projecting $\partial\mW$ into the subspace of matrices whose row/column spaces are spanned by $\mL$ and $\mR$.
\end{restatable}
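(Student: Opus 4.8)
The plan is to reduce the statement to a clean assertion about orthogonal projectors on the space of $p\times d$ matrices equipped with the Frobenius inner product $\langle \mA,\mB\rangle = \Tr(\mA^{T}\mB)$. First I would substitute the identities of Theorem~\ref{thm:grad_lr}, namely $\partial\mL = (\partial\mW)\mR^{T}$ and $\partial\mR = \mL^{T}(\partial\mW)$, into \eqref{eq:grad_lrk}. Writing $\mP_L := \mL\mL^{T}$ and $\mP_R := \mR^{T}\mR$, the orthonormality hypotheses $\mL^{T}\mL = \mI$ and $\mR\mR^{T} = \mI$ make $\mP_L,\mP_R$ symmetric idempotents, i.e.\ the orthogonal projectors onto $\mathrm{col}(\mL)$ and $\mathrm{row}(\mR)$ respectively. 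After substitution the three terms become $(\partial\mW)\mP_R$, $\mP_L(\partial\mW)$, and $\mP_L(\partial\mW)\mP_R$, so the proposed update is the value at $\partial\mW$ of the linear map
\[
\Phi(\mX) \;:=\; \mP_L\mX + \mX\mP_R - \mP_L\mX\mP_R .
\]

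Next I would identify the target subspace and prove $\Phi$ is the orthogonal projection onto it. Let $V_1$ be the matrices whose column space lies in $\mathrm{col}(\mL)$ and $V_2$ the matrices whose row space lies in $\mathrm{row}(\mR)$; the subspace ``whose row/column spaces are spanned by $\mL$ and $\mR$'' is $V := V_1 + V_2$. A self-adjoint idempotent linear operator is the orthogonal projector onto its image, so it suffices to check three things. (i) \emph{Idempotence} $\Phi^2=\Phi$: expanding the product and using $\mP_L^2=\mP_L$, $\mP_R^2=\mP_R$ together with the fact that left-multiplication by $\mP_L$ commutes with right-multiplication by $\mP_R$, all cross terms collapse and the coefficient of the mixed term $\mP_L\mX\mP_R$ works out to $-1$, recovering $\Phi$. (ii) \emph{Self-adjointness} in $\langle\cdot,\cdot\rangle$: since $\mP_L,\mP_R$ are symmetric, left- and right-multiplication by them are each self-adjoint and commute, hence so is $\Phi$. (iii) \emph{Image equals $V$}: $\mP_L\mX\in V_1$ and $\mX\mP_R\in V_2$ give $\mathrm{image}(\Phi)\subseteq V$, while any $\mN\in V_1$ obeys $\mP_L\mN=\mN$ and thus $\Phi(\mN)=\mN$ (and symmetrically for $V_2$), so $V\subseteq\mathrm{image}(\Phi)$.

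The main obstacle is step (iii) together with fixing the correct reading of the target space: the phrase must be interpreted as the \emph{sum} $V_1+V_2$, not the intersection. The naive guess $\mP_L(\partial\mW)\mP_R$ is the projection onto $V_1\cap V_2$ and discards exactly the components that live in $V_1$ or $V_2$ but not both; the inclusion--exclusion correction $-\mP_L\mX\mP_R$ is precisely what enlarges the image to all of $V$ while keeping $\Phi$ idempotent. Verifying that $\Phi$ fixes every element of $V_1$ and of $V_2$ (not merely their common part), and confirming the sign/coefficient bookkeeping in the idempotence computation, is where the real care is needed; the remaining algebra is routine.
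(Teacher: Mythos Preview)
Your proposal is correct and follows the same route as the paper: substitute the identities $\partial\mL=(\partial\mW)\mR^{T}$ and $\partial\mR=\mL^{T}(\partial\mW)$ from Theorem~\ref{thm:grad_lr} into \eqref{eq:grad_lrk} to obtain $\mP_L(\partial\mW)+(\partial\mW)\mP_R-\mP_L(\partial\mW)\mP_R$, and identify this as the projection. The paper simply \emph{asserts} this last expression is ``the projection of $\partial\mW$ onto $\mL$ and $\mR$'' and stops; you go further and actually prove it is the orthogonal projector onto $V_1+V_2$ by checking idempotence, self-adjointness, and image, which is a genuine improvement in rigor over the paper's argument.
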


\begin{proof}
The proofs of Theorem~\ref{thm:grad_lr} and Corollary~\ref{corollary:grad_lr} are relegated to Appendix~\ref{apd:subsec:proof_sec2}.
\end{proof}

We note that if $\mL$ and $\mR$ consist of orthonormal bases, Corollary~\ref{corollary:grad_lr} states that we can obtain the projection of  $\partial\mW$ without explicitly computing and storing $\partial\mW$! The size of gradient on $\mL$ or $\mR$ is much smaller than the size of $\partial\mW$ if the gradient carriers are chosen to be low-rank. Therefore, this reparametrization provides a convenient way to compute and store projected gradients of a large matrix. This is extremely beneficial for the scenarios where individual gradients $\{\partial_i \mW\}_{i=1}^{m}$ are required, e.g., approximating the variance of gradients and controlling the gradient sensitivity.

It is natural to ask how to choose $\mL$ and $\mR$ so that the update in Corollary~\ref{corollary:grad_lr} contains the most information of $\partial\mW$. Ideally, we can first compute the aggregated gradient $\partial\mW$ and run \emph{singular value decomposition} (SVD) $\partial\mW=\mU\mSigma\mV^{T}$. Then we can choose the top few columns of $\mU$ and $\mV$ to serve as the gradient carriers. In this case, the update in Corollary~\ref{corollary:grad_lr} is equivalent to approximating $\partial\mW$ with its top-$r$ principal components.

However, in the context of differential privacy, we can not directly decompose $\partial \mW$  as it is private. In the sequel, we give a practical reparametrization scheme for differentially private learning, where we use the historical update to find $\mL$ and $\mR$ and argue the optimality under certain conditions.

One may wonder why not just replace $\mW$ with $\mL$ and $\mR$ instead of doing the reparametrization. We note that the forward and the backward process remain the same as before if doing the reparametrization, and the only change is the gradient computation of $\mW$. In contrast, if using $\mL$ and $\mR$ to replace the weight $\mW$, this would not only reduce the expressive power but also hurt the optimization as the width varies dramatically across layers and the forward/backward signals cannot propagate well by common initialization strategies \cite{glorot2010understanding, he2016deep}.

\subsection{Reparametrization for Convolutional Layers}
\label{sec:lrk_conv}
In the above, we have described how to reparametrize a weight matrix, which covers the usual fully-connected layer and the attention layer in language models. In this subsection, we show the reparametrization of convolutional layers.  Let $\vx\in\mathbb{R}^{d\times w' \times h'}$ be the input feature maps of one sample and $\vh\in\mathbb{R}^{p\times w \times h}$ be the output feature maps. We describe how to compute the elements at one spatial position $\vh_{:,i,j}\in\mathbb{R}^{p}$ where $i\in [0,w]$ and $j\in [0,h]$.

Let $\mW\in \mathbb{R}^{p\times d\times k\times k}$ be the convolution kernels and $\vx^{(i,j)}\in\mathbb{R}^{d\times k\times k}$ be the features that we need to compute $\vh_{:,i,j}$. The output feature $\vh_{:,i,j}$ can be computed as 
$\vh_{:,i,j}=\bar\mW \vx^{(i,j)}$,
where $\bar\mW\in\mathbb{R}^{p\times dk^{2}}$ is obtained by flattening the channel and kernel dimensions. Hence, we can use the same way as in Eq~(\ref{eq:repara}) to  reparametrize $\bar\mW$: 
\begin{flalign}
\vh_{:,i,j} = \mL\mR\vx^{(i,j)} + (\bar\mW-\mL\mR)\vx^{(i,j)}.
\end{flalign}

Specifically, the operation of $\mR$ and $\mL$ are implemented by  two consequent convolutional layers with kernel sizes $r\times d\times k\times k$ \ and $p\times r\times 1\times 1$, respectively, where $r$ is the reparametrization rank. The residual weight is implemented by a convolutional layer of the original kernel size.

\section{Private Deep Learning with Reparametrized Gradient Perturbation}
\label{sec:dp_learning_lrk}

The above reparametrization strategy can significantly reduce the gradient dimension, which could help us circumvent the difficulties of applying differential privacy on large machine learning models. In this section, we propose a procedure ``reparametrized gradient perturbation (RGP)'' to train large neural network models  with differential privacy. Specifically, Section \ref{subsec:dp_learning_lrk_algo} introduces the whole procedure of RGP, Section \ref{subsec:privacy_rgp} gives the privacy guarantee of RGP, and Section \ref{subsec:complexity} presents the complexity analysis.

\begin{algorithm}[tb]
   \caption{Reparametrized Gradient Perturbation (RGP)}
   \label{alg:dp_lrk_repara}
\begin{algorithmic}[1]
  \STATE {\bfseries Input:} NN with weight matrices $\{\mW^{(l)}\}_{l=1}^{H}$,  steps $T$,  probability $q$, variance $\sigma^2$, clipping threshold $C$, warm-up steps $T_{\text{warm-up}}$, Algorithm \ref{alg:decompose_pi} input $\{r, K\}$.

  \STATE Randomly initialize the weights and obtain $\{\mW^{(l)}_{0}\}_{l=1}^H$;
  \FOR{$t=1$ {\bfseries to} $T$}
    \medskip
        \STATE Sample a minibatch $\{\vx_{i}\}_{i\in S_t}$ with probability $q$;
    \medskip

      \STATE For all $l\in [H]$, compute historical updates 
      $$\Delta_t^{(l)} \leftarrow \mW^{(l)}_t - \mW^{(l)}_0 \cdot 1_{\{t>T_{\text{warm-up}}\}};$$
      and run Alg.~\ref{alg:decompose_pi} with $\{\Delta_t^{(l)}, r, K\}$ to get $\mL_t^{(l)},\mR_t^{(l)}$;

    \medskip
    \STATE \textsl{//Forward/backward process with reparametrization.}

    \STATE Run reparametrized forward process with Eq~(\ref{eq:repara});
    \STATE Run backward process and compute individual gradients $\{\partial_i\mL_t^{(l)},\partial_i\mR_t^{(l)}\}_{l\in[H], i\in S_t}$;
    \medskip
    \STATE \textsl{//Bound gradient sensitivity and add noise.}
    
    \STATE Clip individual gradients with $L_{2}$ norm threshold $C$;
   
        \FOR{$l=1$ {\bfseries to} $H$} 
    
    \STATE Sum individual gradients and get  $\{\partial\mL_t^{(l)},\partial\mR_t^{(l)}\}$;
    \STATE Perturbation with Gaussian noise $\vz_{L,t}^{(l)},\vz_{R,t}^{(l)}$ whose elements are independently from $\mathcal{N}(0,\sigma^{2}C^{2})$:
       $$\tilde{\partial}\mL_t^{(l)}  \leftarrow \partial\mL_t^{(l)} + \vz_{L,t}^{{(l)}}, \quad \tilde{\partial}\mR_t^{(l)} \leftarrow \partial\mR_t^{(l)}+\vz_{R,t}^{(l)};$$ 
    \STATE Use $\tilde{\partial}\mL_t^{(l)}$, $\tilde{\partial}\mR_t^{(l)}$, and Eq~(\ref{eq:grad_lrk}) to  construct $\tilde{\partial} \mW_t^{(l)}$;
    \STATE Use off-the-shelf optimizer to get $\mW_{t+1}^{(l)}$;
    \ENDFOR

  \ENDFOR

\end{algorithmic}
\end{algorithm}

\subsection{Reparametrized Gradient Perturbation Algorithm}
\label{subsec:dp_learning_lrk_algo}

The pseudocode of RGP is presented in Algorithm~\ref{alg:dp_lrk_repara}. The RGP proceeds for all the layers and we ignore the layer index for simplicity in the following discussion. At each update, for a layer with weight matrix $\mW$, RGP consists of four steps: 1) generate the gradient-carrier matrices $\mL$ and $\mR$, 2) run the reparametrized forward/backward process and obtain the individual gradients $\{\partial_i \mL\}_{i=1}^{m}$ and $\{\partial_i \mR\}_{i=1}^{m}$, 3) clip and perturb the gradients, 4) reconstruct an approximated gradient on the original weight matrix. 

In the RGP procedure, \textbf{step 1)}, which is also the core challenge, is to choose ``good" gradient-carrier matrices so that the reconstructed gradient can approximate the original gradient as well as possible. First, this requires for a given rank $r$, the generated gradient-carrier matrices should align with the principal components of the original gradient well. Moreover, to reconstruct the gradient in step 4), it requires the gradient carriers have orthonormal columns/rows. 

For the first requirement, we use  historical updates to find the gradient carriers. The historical update is not sensitive because of the post-processing property of differential privacy.  In Section~\ref{subsec:historical_grad}, we give both empirical and theoretical arguments to  demonstrate that the principal subspace of the current gradient aligns with that of the historical update.   In our implementation, we use a warm-up phase in which the decomposition is directly done on the weight. We approximate the principal components via the power method (Algorithm~\ref{alg:decompose_pi}) instead of  the time-consuming full SVD. For the second requirement, we apply  the Gram-Schmidt process to orthonormalize $\mL$ and $\mR$.

\begin{algorithm}[tb]
   \caption{Decomposition via Power Method.}
   \label{alg:decompose_pi}
\begin{algorithmic}
  \STATE {\bfseries Input:} Historical update $\Delta$, reparametrization rank $r$, number of iterations $K$.
  \STATE {\bfseries Output:} Gradient carriers $\mL\in\mathbb{R}^{p\times r}$,  $\mR\in\mathbb{R}^{r\times d}$.
  
  \medskip

    \STATE Initialize  $\mR$ from standard Gaussian distribution.
  
    \FOR{$k=1$ {\bfseries to} $K$} 
    \STATE $\mL \leftarrow \Delta \mR^{T}$
    \STATE Orthonormalize the columns of $\mL$.
    \STATE $\mR=\mL^{T}\Delta$
    \ENDFOR
    \STATE Orthonormalize the rows of $\mR$.

    \STATE Return $\mL$, $\mR$
\end{algorithmic}
\end{algorithm}

\textbf{Step 2)} of RGP is the reparametrization and a round of forward/backward propagations, as presented in Section \ref{sec:lrk}.

\textbf{Step 3)} is for differential privacy guarantee. The individual gradients $\{\partial_i \mL, \partial_i \mR\}_{i=1}^{m}$ are first clipped by a pre-defined threshold so that the sensitivity is bounded. Then, Gaussian noise is added to the aggregated gradient to establish a differential privacy bound. The energy of added noise is proportional to the dimension, i.e., the rank $r$ of the carrier matrices. Hence, in order to make the noise energy small, it encourages us to use smaller rank $r$. However, smaller rank would increase the approximation error in the \textbf{step 1)}. In practice, we trade off these two factors to choose a proper $r$.

In \textbf{step 4)}, we use the noisy aggregated gradients of gradient-carrier matrices to reconstruct the gradients of original weights, as depicted in Corollary~\ref{corollary:grad_lr}. The reconstructed gradients can then be used by any off-the-shelf optimizer.

\subsection{Privacy Analysis of RGP}
\label{subsec:privacy_rgp}

The privacy bound of Algorithm~\ref{alg:dp_lrk_repara} is given by  Proposition~\ref{prop:privacy}. The derivation of Proposition~\ref{prop:privacy} is based on the \emph{moments accountant} that is proposed in \citet{abadi2016deep}.  Moments accountant has tighter composition bound than the strong composition theorem in \citet{algofound}.  Moments accountant first tracks the privacy budget spent at each update. Then, it composes the spent budget of all updates and cast the final privacy cost into the classic $(\epsilon,\delta)$-differential privacy.

\begin{restatable}[\citet{abadi2016deep}]{proposition}{privacy}\label{prop:privacy}
There exist constants $c_1$ and $c_2$ so that given running steps $T$, for any $\epsilon<c_{1}q^{2}T$, Algorithm~\ref{alg:dp_lrk_repara} is $\left(\epsilon,\delta\right)$-differentially private for any $\delta>0$ if we choose \[\sigma\geq c_2\frac{q\sqrt{Tlog\left(1/\delta\right)}}{\epsilon}.\]
\end{restatable}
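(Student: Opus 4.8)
The plan is to reduce Algorithm~\ref{alg:dp_lrk_repara} to the subsampled Gaussian mechanism analyzed by \citet{abadi2016deep} and then invoke their moments accountant essentially verbatim; the only genuine work is checking that the \emph{data-dependent} choice of gradient carriers does not corrupt the accounting. First I would fix an iteration $t$ and condition on the entire transcript released in iterations $1,\dots,t-1$ (the noisy carrier gradients, and hence the updated weights $\mW_1^{(l)},\dots,\mW_t^{(l)}$). The carriers $\mL_t^{(l)},\mR_t^{(l)}$ returned by Algorithm~\ref{alg:decompose_pi} are computed solely from the historical update $\Delta_t^{(l)}=\mW_t^{(l)}-\mW_0^{(l)}\cdot 1_{\{t>T_{\text{warm-up}}\}}$, which is a deterministic function of this transcript. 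By the post-processing property of differential privacy they therefore incur no additional privacy loss and may be treated as fixed, data-independent matrices when analyzing step $t$.

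Next I would identify the per-step mechanism. Conditioned on the carriers, stack the individual carrier gradients of all layers into one vector $\vg_i=\mathrm{concat}(\{\partial_i\mL_t^{(l)},\partial_i\mR_t^{(l)}\}_{l\in[H]})$; then the algorithm (i) clips each $\vg_i$ to $L_2$ norm at most $C$, so that adding or removing a single sample changes the summed vector by at most $C$; (ii) sums the $\vg_i$ over a minibatch drawn by Poisson sampling with rate $q$; and (iii) adds spherical Gaussian noise of standard deviation $\sigma C$ (the concatenation of the $\vz_{L,t}^{(l)},\vz_{R,t}^{(l)}$). This is exactly a subsampled Gaussian mechanism with sensitivity $C$ and noise multiplier $\sigma$. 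Crucially, the privacy cost of this object depends only on $q$ and $\sigma$, not on the ambient dimension of $\vg_i$; so although the reparametrization shrinks that dimension from the full weight size to one governed by $r$, the per-step analysis is identical to that of vanilla DP-SGD.

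I would then apply the moments accountant. Writing $\alpha(\lambda)$ for the $\lambda$-th log moment of the privacy-loss random variable of one subsampled Gaussian step, \citet{abadi2016deep} show that there is a constant for which $\alpha(\lambda)\le q^2\lambda(\lambda+1)/\big((1-q)\sigma^2\big)+O(q^3\lambda^3/\sigma^3)$ over an admissible range of integer $\lambda$. Because the carriers at step $t$ are measurable with respect to the past transcript, the $T$ steps form an adaptive composition, and the composability of the moments accountant gives that the log moment of the whole algorithm is at most $T\,\alpha(\lambda)$. The standard tail bound then certifies $(\epsilon,\delta)$-differential privacy whenever $\delta\ge\min_\lambda\exp\!\big(T\alpha(\lambda)-\lambda\epsilon\big)$.

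Finally I would optimize over $\lambda$. In the regime $\epsilon<c_1q^2T$ one may take $\lambda$ of order $\log(1/\delta)/\epsilon$, small enough that the higher-order $O(q^3\lambda^3/\sigma^3)$ term is dominated by the leading quadratic term; choosing $\sigma\ge c_2\,q\sqrt{T\log(1/\delta)}/\epsilon$ then forces $T\alpha(\lambda)-\lambda\epsilon\le\log\delta$, which is the claim, with $c_1,c_2$ inherited from Theorem~1 of \citet{abadi2016deep}. The main obstacle is not any of these computations, which are quoted directly, but the conceptual reduction in the first two paragraphs: one must argue carefully that the adaptively, data-derived subspace used for reparametrization is a legitimate post-processing of already-private releases, so that conditioning on it reduces each RGP step to a clean Gaussian mechanism and the existing accountant applies unchanged.
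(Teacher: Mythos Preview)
Your proposal is correct and follows essentially the same approach as the paper: reduce each RGP step to the subsampled Gaussian mechanism of \citet{abadi2016deep} and invoke their moments accountant, with adaptive composition across the $T$ steps. The paper's own proof is only a brief sketch pointing to Appendix~B of \citet{abadi2016deep}; your version is strictly more detailed, and in particular makes explicit the post-processing argument for the data-dependent carriers $\mL_t^{(l)},\mR_t^{(l)}$, which the paper mentions only informally in the main text rather than in the proof itself.
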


\begin{proof}
The proof outline is relegated to Appendix~\ref{apd:subsec:proof_sec3}.
\end{proof}

  The value of $\sigma$ in Proposition~\ref{prop:privacy} is based on an asymptotic bound on the moments of the privacy loss random variable. In practice, one can use the numerical tools \citep{wang2019subsampled,mironov2019renyi} to compute a tighter bound. So far we have depicted the overall picture of RGP. We next analyze  the computational and memory costs of RGP and compare them with that of DP-SGD.

\subsection{Complexity Analysis of RGP}
\label{subsec:complexity}
For the simplicity of notations, we only give the costs of one fully connected layer at one update (including forward and backward) and assume that the weight matrix is square. The shape of weight matrix, size of minibatch, number of power iterations, and rank of reparametrization are denoted by $(d\times d)$, $m$, $K$, and $r$, respectively.

The computational overhead of RGP consists of three parts. The first part is induced by matrix multiplication of power iteration, whose complexity is $\mathcal{O}(Krd^{2})$. The second part is induced by the Gram–Schmidt process, whose complexity is $\mathcal{O}(Kr^{2}d)$. The third part of overhead is the computational cost induced by gradient carriers during the forward/backward process, which is on the order of $\mathcal{O}(mrd)$. 

RGP uses much less memory than DP-SGD in the practice.   Although RGP needs some extra memory to store the activation produced by the gradient carriers, it has a significant advantage over DP-SGD on the memory cost of storing individual gradients, which is one of the main challenges of learning with differential privacy. For RGP, the memory cost of individual gradients only scales linearly with model width $d$ in contrast with $d^2$ for DP-SGD. We summarize the computational cost of one update and the memory cost of storing individual gradients in  Table~\ref{tbl:complexity}.

\begin{table} 
    \caption{Computation and memory costs of RGP (Algorithm~\ref{alg:dp_lrk_repara}) and DP-SGD \citep{abadi2016deep}, where $m$ is the size of mini-batch, $d$ is the model width, $r$ is the reparametrization rank, and $K$ is the number of power iterations.}
\label{tbl:complexity}
\centering
\small
\renewcommand{\arraystretch}{1.85}
\begin{tabular}{ P{2.45cm}|P{1.15cm}|P{3.4cm} }
 \hline \hline
   \backslashbox{Cost}{Method}            & DP-SGD 	& RGP  		 \\
 \hline
Computational cost     &   $\mathcal{O}(md^{2})$     & $\mathcal{O}(md^{2}+Krd^2+Kr^{2}d)$			\\\hline
Memory cost    &  $\mathcal{O}(md^{2})$    & $\mathcal{O}(mrd)$ 			\\

 \hline
 \hline
\end{tabular} 
\end{table}

The low-rank nature of gradient permits us to choose a small $r$ without destroying utility (see Section~\ref{subsec:grad_is_lrk}). In practice, we typically choose the rank $r$ smaller than $10$.  For the number of power iterations in Algorithm~\ref{alg:decompose_pi}, we find that setting $K=1$ is sufficient to get good performance. Hence, in practice, we always choose small $r$ and $K$ for efficiency while not hurting the performance.

\section{Two Properties of the Gradient Matrix} \label{sec:grad_property}

We show two properties of the gradients of modern deep neural networks to justify the design choices of Algorithm~\ref{alg:dp_lrk_repara}. The first property is that the gradient of each weight matrix is naturally low-rank, which motivates us to use low-rank reparameterization. The second property is that the gradient of a weight matrix along the optimization path could stay in the same subspace, which motivates us to use the historical updates to generate the gradient-carrier matrices.

\subsection{Gradient Matrix Is of Low Stable Rank}
\label{subsec:grad_is_lrk}

Recent works have used the low-rank approximation to compress the gradients and reduce the communication cost in distributed optimization \citep{yurtsever2017sketchy, wang2018atomo, karimireddy2019error, vogels2019powersgd}. These existing works set up a good motivation to exploit the low stable rank property of the gradients of weight matrices.

We further verify this low-rank property which may give a hint about how to set the reparameterization rank $r$ in practice. We empirically compute the stable rank ($\|\cdot\|_F^2/\|\cdot\|^2_{2}$) of the gradient of the weight matrices in a BERT model and a wide ResNet model. The dataset for the BERT model is SST-2 from the GLUE benchmark \citep{wang2018glue}. The dataset for the wide ResNet model is CIFAR-10 \cite{cifar}. The experimental setup can be found in Section~\ref{sec:exp}. We plot the gradient stable rank in Figure~\ref{fig:stbl_rank}. 

 \begin{figure}
    \centering
  \includegraphics[width=0.8\linewidth]{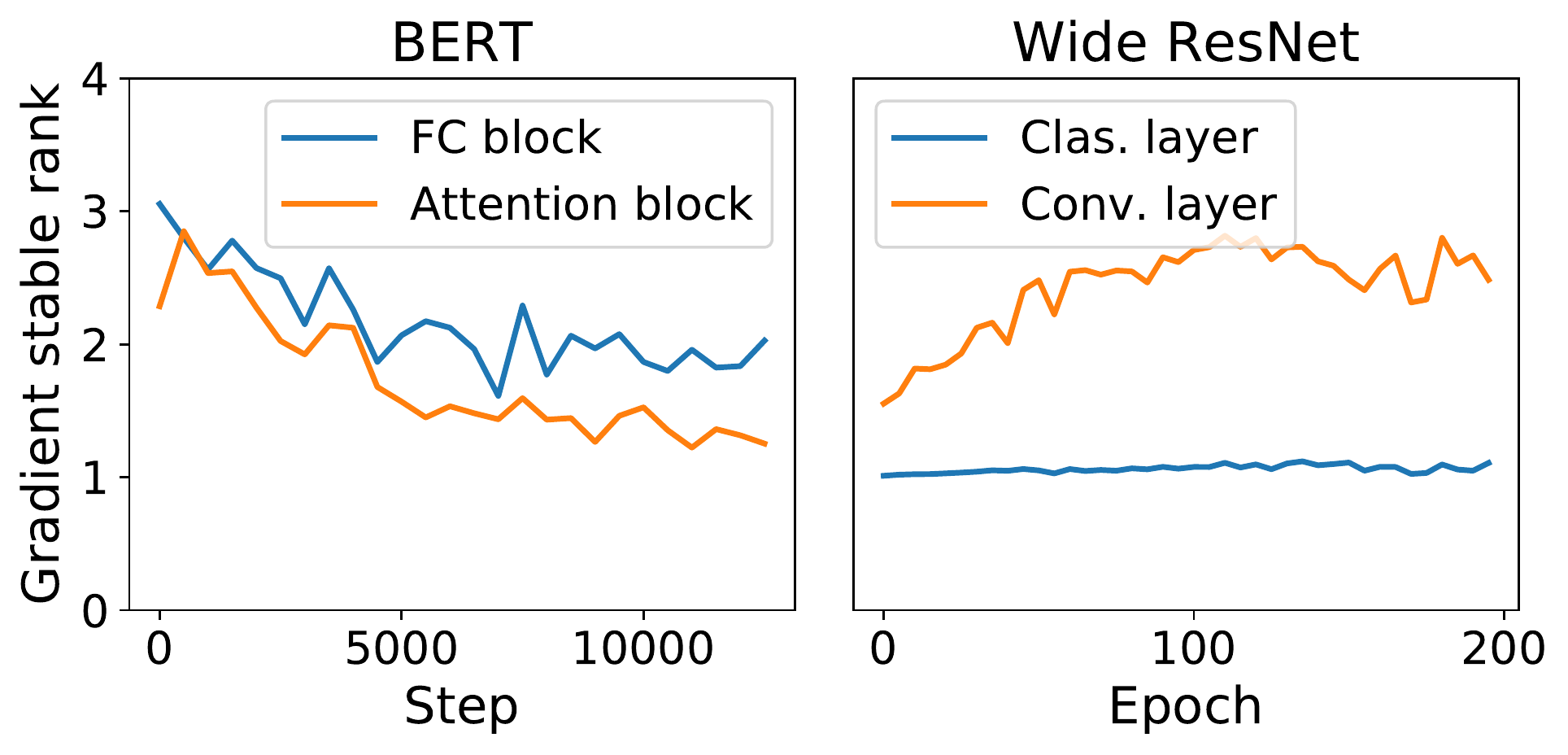}
  \caption{Gradient stable rank ($\|\cdot\|_F^2/\|\cdot\|^2_{2}$). For ResNet, we plot the gradient rank of the classification layer and the first residual block. For BERT, we plot the gradient rank of the first fully-connected block and the first attention block.}
  \label{fig:stbl_rank}
\end{figure}

As shown in Figure~\ref{fig:stbl_rank}, both the gradients of BERT and ResNet models are naturally of low stable rank over the training process. Hence, low-rank gradient-carrier matrices would have a small approximation error if  we find the right gradient subspace. In Section \ref{subsec:historical_grad}, we argue that historical update is a good choice to identify the gradient subspace.

\subsection{Historical Gradients Are Correlated}
\label{subsec:historical_grad}

Suppose that $\mW_t$ is a weight matrix at step $t$, and $\partial \mW_t$ is the gradient with a batch of data $\sD$ with a $r$-SVD $\partial \mW_t = \mU_t \Sigma_t \mV_t^T$. For another step ${t'}$ with $t'>t$ and the same data $\sD$, we have $\mW_{t'}, \partial \mW_{t'}$ and a $r$-SVD: $\partial \mW_{t'} = \mU_{t'} \Sigma_{t'} \mV_{t'}^T$. We can project $\partial \mW_{t'}$ onto the principal subspace of $\partial \mW_t$ or $\partial \mW_{t'}$ and measure the  projection residual
\begin{flalign}
&\|(\mI - \mU_t\mU_t^T)\partial \mW_{t'}(\mI-\mV_t\mV_t^T)\|_F/\|\partial \mW_{t'}\|_F,\label{eq:proj_res}
\\
&\|(\mI - \mU_{t'}\mU_{t'}^T)\partial \mW_{t'}(\mI-\mV_{t'}\mV_{t'}^T)\|_F/\|\partial \mW_{t'}\|_F,\label{eq:self_proj_res}
\end{flalign}
where Eq~(\ref{eq:proj_res}) is the projection residual using historical gradient, referred to as \emph{historical projection residual}, and Eq~(\ref{eq:self_proj_res}) is the projection residual using current gradient, referred to as \emph{self projection residual}. A small difference between Eq~(\ref{eq:proj_res}) and~(\ref{eq:self_proj_res}) indicates that the principal subspace of the current gradient aligns with that of the historical gradient.

 \begin{figure}[t]
    \centering
  \includegraphics[width=0.8\linewidth]{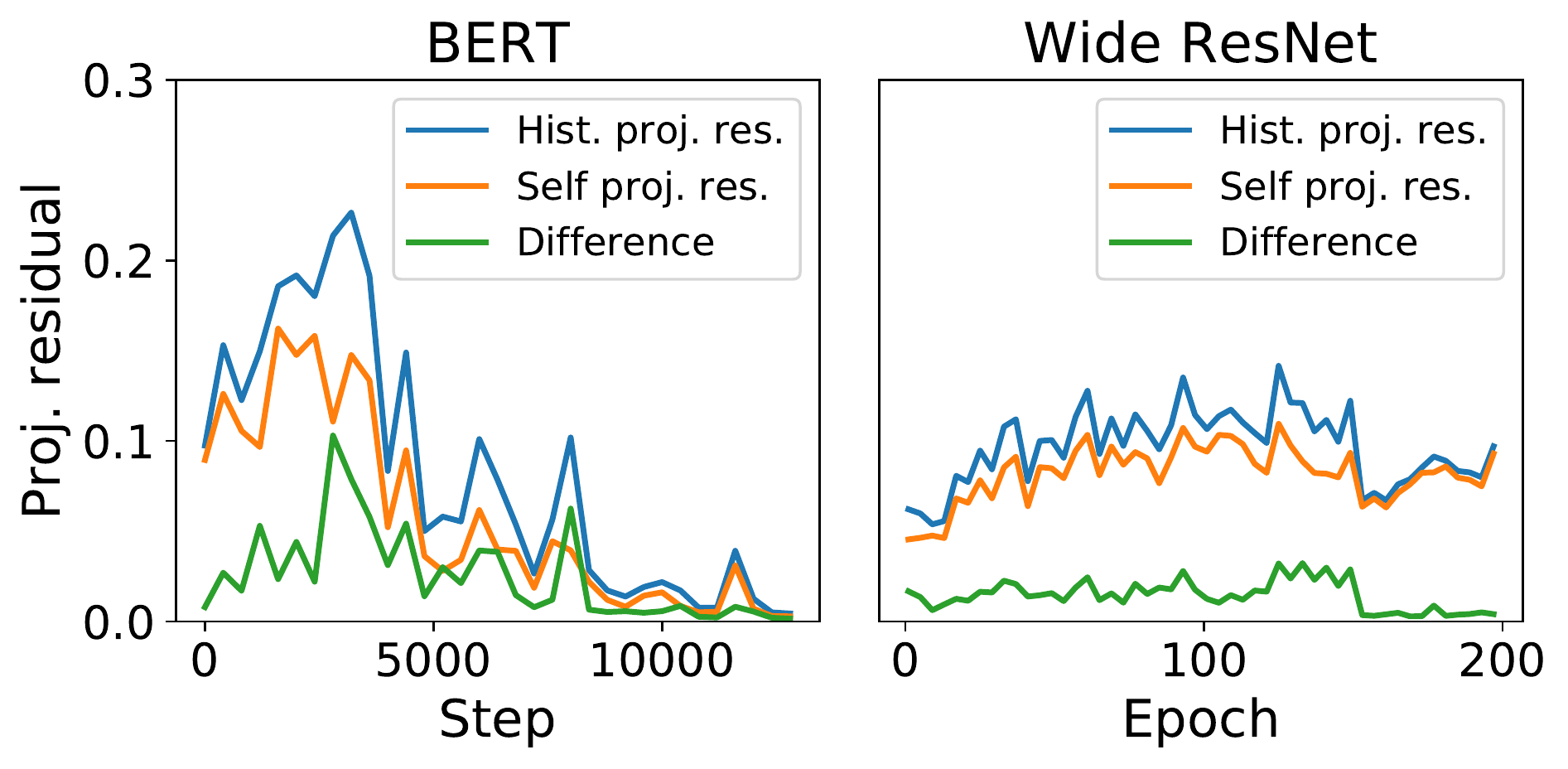}
  \caption{Projection residual with reparametrization rank $8$.  We use a fixed mini-batch with $500$ samples. For ResNet, we use the input convolution layer.  For BERT, we use the second matrix of the FC layer in the first encoder block. The definition of historical/self projection residual is in Eq~(\ref{eq:proj_res}) and~(\ref{eq:self_proj_res}).   }
  \label{fig:proj_res}
\end{figure}

We empirically examine the projection residual of a BERT model and a wide ResNet model. The tasks are the same as in Section~\ref{subsec:grad_is_lrk}.   At the beginning of each epoch, we evaluate the projection residual between the current gradient and the gradient of the previous epoch. The results are plotted in Figure~\ref{fig:proj_res}. We can see that the difference between  Eq~(\ref{eq:proj_res}) and~(\ref{eq:self_proj_res}) is small for both models.

To understand why historical gradients are correlated, we next use a linear regression problem to rigorously show that the gradients over time could live in the same subspace. Suppose we have a set of observations $\{(\vx_i, \vy_i)\}_{i=1}^n$, where $\vx_i \in \sR^d$ is the feature vector and $\vy_i\in \sR^{p}$ is the target vector for all $i \in [n]$. The least-squares problem is given by 
\begin{flalign}
\argmin_\mW  \frac{1}{n}\sum_{i=1}^n \|\vy_i - \mW \vx_i\|^2. \label{eq:least-squares}
\end{flalign}

\begin{restatable}{proposition}{gradalign}\label{prop:grad-align}
For the least squares problem (\ref{eq:least-squares}), if the model is updated by gradient descent with step size $\eta$
\begin{flalign}
\mW_{t+1} \leftarrow \mW_t - \eta \cdot \partial\mW_t, \label{eq:gd}
\end{flalign}
then the gradients $\{\partial \mW_t\}_{t\ge 1}$ share the same range and null space. That is to say, if $\partial \mW_1$ is rank $r$ and has $r$-SVD $\partial \mW_1 = \mU_1 \Sigma_1 \mV_1^T$, then for all $t\ge 1$, we have
\begin{flalign}
(\mI - \mU_1\mU_1^T) \partial \mW_t = 0,\;  \partial \mW_t (\mI - \mV_1\mV_1^T)= 0.
\end{flalign}
\end{restatable}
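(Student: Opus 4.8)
The plan is to collapse the whole sequence $\{\partial\mW_t\}$ into a single recursion and then read off both claims. First I would stack the data into $\mX=[\vx_1,\dots,\vx_n]\in\sR^{d\times n}$ and $\mY=[\vy_1,\dots,\vy_n]\in\sR^{p\times n}$ and differentiate the objective (\ref{eq:least-squares}) to obtain the matrix-form gradient $\partial\mW_t=\tfrac{2}{n}(\mW_t\mX-\mY)\mX^T$. The structural observation that drives everything is that $\mX^T$ appears as a \emph{fixed right factor}: $\partial\mW_t=\mA_t\mX^T$ with $\mA_t=\tfrac{2}{n}(\mW_t\mX-\mY)$, so for every $t$ the row space of $\partial\mW_t$ lies in $\mathrm{Col}(\mX)$ and $\ker(\mX^T)\subseteq\ker(\partial\mW_t)$.

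Next I would substitute the gradient-descent step (\ref{eq:gd}) back into this formula. Setting $\mC=\tfrac{2}{n}\mX\mX^T$, a one-line cancellation gives the recursion $\partial\mW_{t+1}=\partial\mW_t(\mI-\eta\mC)$, and hence $\partial\mW_t=\partial\mW_1(\mI-\eta\mC)^{t-1}$ by induction. Thus the entire trajectory of gradients is produced from $\partial\mW_1$ by repeated right-multiplication by the fixed symmetric positive semidefinite matrix $\mI-\eta\mC$. The range claim is then immediate: right-multiplication can only shrink the column space, so $\mathrm{Col}(\partial\mW_t)\subseteq\mathrm{Col}(\partial\mW_1)=\mathrm{Col}(\mU_1)$, which is exactly $(\mI-\mU_1\mU_1^T)\partial\mW_t=0$.

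The null-space claim $\partial\mW_t(\mI-\mV_1\mV_1^T)=0$, equivalently $\ker(\partial\mW_1)\subseteq\ker(\partial\mW_t)$, is the delicate part and where I expect the real work to sit. Right-multiplication by $\mI-\eta\mC$ does \emph{not} preserve row spaces in general, so the recursion alone is not enough; the argument must exploit the fixed-factor structure. Since every $\partial\mW_t$ annihilates $\ker(\mX^T)$, it suffices to show $\ker(\partial\mW_1)=\ker(\mX^T)$, i.e. that the row space of $\partial\mW_1$ is the whole of $\mathrm{Col}(\mX)$. This is where the rank-$r$ hypothesis does its job: it pins the row space of $\partial\mW_1$ to the invariant subspace $\mathrm{Col}(\mX)$ of $\mC$ (equivalently, $\ker(\partial\mW_1)$ is $\mC$-invariant because it equals $\ker(\mC)=\ker(\mX^T)$), after which $\ker(\partial\mW_1)=\ker(\mX^T)\subseteq\ker(\partial\mW_t)$ closes the argument. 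The main obstacle is therefore identifying the row/null space of the gradient with the data-determined subspace $\mathrm{Col}(\mX)/\ker(\mX^T)$; once that identification is in place, both statements follow, the range one for free from the recursion and the null-space one from the fixed right factor.
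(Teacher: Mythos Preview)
Your derivation of the recursion $\partial\mW_{t+1}=\partial\mW_t(\mI-\eta\mC)$ is exactly the paper's argument (the paper writes $\mI-\eta\sum_i\vx_i\vx_i^T$, dropping the $1/n$, but the structure is identical), and from it the column-space containment $(\mI-\mU_1\mU_1^T)\partial\mW_t=0$ follows as you say. The paper's proof in fact stops right there, concluding in a single sentence that the $\partial\mW_t$ ``have the same row/column spaces'' with no further justification.

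You are correct that the row-space direction is the delicate one, and your instinct to exploit the fixed right factor $\mX^T$ is sound: it gives $\ker(\mX^T)\subseteq\ker(\partial\mW_t)$ for every $t$. The gap is in the step where you invoke the rank-$r$ hypothesis to conclude $\ker(\partial\mW_1)=\ker(\mX^T)$. The hypothesis ``$\partial\mW_1$ has rank $r$'' merely \emph{names} the rank; it does not pin $r$ to $\mathrm{rank}(\mX)$, and without $r=\mathrm{rank}(\mX)$ the equality $\mathrm{Row}(\partial\mW_1)=\mathrm{Col}(\mX)$ need not hold. A concrete obstruction: take $p=1$, $d=n=2$, $\vx_1=(1,0)^T$, $\vx_2=(0,2)^T$, $\vy_1=\vy_2=1$, $\mW_1=0$. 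Then $\partial\mW_1$ is a nonzero scalar multiple of $(1,2)$ (so $r=1$), while $\mathrm{Col}(\mX)=\sR^2$ and $\ker(\mX^T)=\{0\}\subsetneq\ker(\partial\mW_1)$. One gradient step with any small $\eta$ yields $\partial\mW_2\propto(1-\eta/2,\,2(1-2\eta))$, which is not parallel to $(1,2)$, and a direct check gives $\partial\mW_2(\mI-\mV_1\mV_1^T)\neq 0$.

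So you have gone further than the paper by correctly isolating where the argument is nontrivial, but the resolution you propose does not go through; the row-space half of the displayed claim actually fails without an additional assumption (e.g.\ $r=\mathrm{rank}(\mX)$, or that $\ker(\partial\mW_1)$ is $\mC$-invariant), and the paper's one-line assertion does not supply one either.
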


\begin{proof}
The proof is relegated to Appendix \ref{apd:subsec:proof_sec4}.
\end{proof}

Hence we can use the historical updates $\mW_t-\mW_0$ to identify  gradient row/column subspaces as in Algorithm \ref{alg:dp_lrk_repara}.

That indicates that for the weight matrix $\mW\in \sR^{p \times d}$, if the gradient turns out to be low-rank $r$ due to the data $\{\vx_i, \vy_i\}$, we can possibly first identify the intrinsic subspace which is of $r(p+d)$ dimension instead of the original $p\cdot d$ number of parameters. Then we can work within this subspace for differentially private empirical risk minimization. This can both reduce the effect of noise and save the memory cost of gradient perturbation due to the small intrinsic dimension.  
We note that identifying the low-rank subspace can be done approximately as in the algorithm, or by using some auxiliary public data as in \citet{zhou2021bypassing, yu2021do}.

\begin{remark}\label{rem:lst-sqr}
Suppose that the least-squares objective $L(\mW):=\frac{1}{n}\sum_{i=1}^n \|\vy_i - \mW \vx_i\|^2$ is $\beta$-smooth and the gradient subspace is rank $r$ and can be exactly identified. Let the optimizer of RGP be gradient descent and $\sigma$ be set as in Proposition~\ref{prop:privacy}. If $\eta=\frac{1}{\beta}$, $T=\frac{n\beta\epsilon}{\sqrt{p}}$, and $\bar{\mW}=\frac{1}{T}\sum_{t=1}^{T}\mW_t$, then 
\[\mathbb{E}[L(\bar{\mW})]-L(\mW_*)\leq \mathcal{O}\left(\frac{\sqrt{(p+d)r\log(1/\delta)}}{n\epsilon}\right),\]
where $\mW_*$ is the optimal point,  $\mW_{t}$ is the output of Algorithm~\ref{alg:dp_lrk_repara} at step $t$.
\end{remark}
The proof of Remark \ref{rem:lst-sqr} can be adapted from \cite{yu2020gradient}.  Although  the exact low-rank property of the gradient  cannot be  rigorously proved for deep neural network because of the co-adaption across layers, we have empirically verified that the gradient matrices are still of low stable rank and stay in roughly the  same subspace over iterations (see Figure \ref{fig:stbl_rank} \& \ref{fig:proj_res}). Our algorithm exploits this fact to reparameterize weight matrices, which achieves better utility and reduces the memory cost compared with DP-SGD.

\section{Experiments} \label{sec:exp}

We conduct experiments on various kinds of tasks to demonstrate the effectiveness of RGP. We first examine the utility of  models trained by RGP. To this end, we apply RGP  on the wide ResNet \citep{zagoruyko2016wide}  and  the BERT \citep{devlin2018bert} models, which are representative models for  computer vision and natural language modeling. The results are presented in Section~\ref{subsec:exp_resnet} and~\ref{subsec:exp_bert}.  The source code of our implementation is publicly available\footnote{\url{https://github.com/dayu11/Differentially-Private-Deep-Learning}}.

\begin{table}
\small
\renewcommand{\arraystretch}{1.2}
\centering
    \caption{Validation accuracy (in \%) of WRN28-4 on vision tasks . } \label{tbl:tbl_resnet}
    
        \begin{tabular}{l|l|l}
        \hline
        \hline
            Method         & SVHN &  CIFAR10    \\
            \hline
              Full (N.P.) &      97.2        &      93.3     \\\cline{1-3}
             Linear (N.P.) &      41.1        &     39.8            \\\cline{1-3}
             RGP (N.P.)   &        97.1       &    91.2      \\\cline{1-3}
             PowerSGD (N.P.) &      97.1         &        91.9         \\\cline{1-3}
                DP-SGD ($\epsilon=8$)  &      91.6         &       55.9   \\\cline{1-3}
             DP-PowerSGD ($\epsilon=8$) &         91.9      &     57.1           \\\cline{1-3}
                RGP-random ($\epsilon=8$)  &    91.7           &   51.0      \\\cline{1-3}
             RGP ($\epsilon=8$)&     94.2         &    63.4     \\\hline \hline
        \end{tabular}
\end{table}

\begin{table}
\small
\centering
    \caption{Validation accuracy (in \%) of RGP on vision tasks with varying $\epsilon$. The model architecture is WRN28-4. Numbers in  brackets denote the improvements compared to DP-SGD. } \label{tbl:vision_vary_eps}
    
        \begin{tabular}{l|l|l|l}
        \hline
        \hline
            Dataset         & $\epsilon=2$ &  $\epsilon=4$ & $\epsilon=6$    \\\hline
             SVHN &    87.3  (+4.1)      &  89.7   (+3.4)    & 92.3  (+3.9)  \\\hline
             CIFAR10 &     44.0 (+6.6)        &   53.3 (+6.4)   &     59.6 (+7.9)      \\\hline \hline
        \end{tabular}
\end{table}

Moreover, we empirically  evaluate the privacy risk of the models via the success rate of \emph{membership inference (MI) attack} \citep{shokri2017membership,sablayrolles2019white,yu2021how}. The results are  presented in Section~\ref{subsec:exp_mi}.

\textbf{Implementation.}  The number of iterations for power method is $1$. We use an open-source tool of moments accountant to compute the privacy loss\footnote{\url{https://github.com/tensorflow/privacy}}. For a given setting of hyperparameters, we  set $\sigma$ to be the smallest value so that the privacy budget is allowable to run desired epochs. All experiments are run on a node with four Tesla V100 GPUs.

\textbf{Baselines.} We implement several baseline algorithms for comparison.  For differentially private learning, the first baseline is \emph{DP-SGD} in \citet{abadi2016deep} and the second one is RGP with gradient carriers consisting of random orthonormal vectors, referred to as \emph{RGP-random}. We also include several non-private baselines, i.e., \textbf{(\romannumeral 1)} \emph{Full (N.P.)}: training the full model, \textbf{(\romannumeral 2)} \emph{Linear (N.P.)}: training only the linear classification layer, \textbf{(\romannumeral 3)} \emph{RGP (N.P.)}: training the model with reparametrization  but without gradient clipping or adding noise.

We consider differentially private  \emph{PowerSGD} \citep{vogels2019powersgd} as another baseline for vision tasks. PowerSGD approximates full gradients with low-rank matrices to reduce the communication cost. It first aggregates the individual gradients and then runs power iterations to find approximations of the principle components of the averaged gradient. Hence for DP-powerSGD, it is necessary to first perturb the aggregated gradient  and then project it into low-rank subspace otherwise  the sensitivity is hard to track after projection. As a consequence,  DP-powerSGD needs to compute the individual gradients explicitly, which costs huge memory as DP-SGD does. In Section~\ref{subsec:exp_resnet}, we add a DP-powerSGD baseline  with the same setting as that of RGP.

Additionally, some ablation experiments are conducted to study the influence of the residual weight and reparametrization ranks, which are relegated to the Appendix~\ref{app:sec:add-exp}.

\subsection{Experiments on Vision Tasks}\label{subsec:exp_resnet}

\textbf{Model.} We use wide ResNet models \citep{zagoruyko2016wide} for the vision tasks. The architecture is WRN28-4 with $\sim$1.5M parameters. All batch normalization layers are replaced with group normalization layers to accommodate private learning.

\textbf{Tasks.} We use two vision  datasets: SVHN \citep{netzer2011reading}  and CIFAR10 \citep{cifar}. SVHN  contains images of $10$ digits and CIFAR10 contains images of 10 classes of real-world objects.

\textbf{Hyperparameters.} We follow the hyperparameters in \citet{zagoruyko2016wide} except using a mini-batch size 1000. This mini-batch size is larger than the default because the averaging effect of large mini-batch reduces the noise variance. The reparametrization rank $r$ is chosen from $\{1, 2, 4, 8, 16\}$. We choose the privacy parameter $\delta<\frac{1}{n}$, and set $\delta=10^{-6}$ for SVHN and $\delta=10^{-5}$ for CIFAR10. We repeat each experiment 3 times and  report the average.

\textbf{Results.} The prediction accuracy with $\epsilon=8$ is presented in Table~\ref{tbl:tbl_resnet}. We can see that RGP (N.P.) achieves comparable performance with training the full model (N.P.). When trained with DP,  RGP outperforms DP-SGD by a considerable margin while enjoying a much lower memory cost. We also compare RGP with DP-SGD using different privacy budgets ($\epsilon=2/4/6$) and report the results  Table~\ref{tbl:vision_vary_eps}.

\subsection{Experiments on the Downstream Tasks of BERT}\label{subsec:exp_bert}

\textbf{Model.} We use the BERT\textsubscript{BASE} model in \citet{devlin2018bert}, which is pre-trained on a massive corpus collected from the Web. The BERT\textsubscript{BASE} model has $\sim$110M parameters.

\textbf{Tasks.} We use four tasks from the General Language Understanding Evaluation (GLUE) benchmark \citep{wang2018glue}, including MNLI, QQP, QNLI, and SST-2. The other  tasks from GLUE are excluded because their datasets are of small sizes (<10K) while differentially private learning  requires  large amount of data \citep{tramer2021differentially}.

\textbf{Hyperparameters.} We follow the hyperparameters in \citet{devlin2018bert}
 except for the mini-batch size and training epochs. The reparametrization rank $r$ is chosen from $\{1, 2, 4, 8\}$. The mini-batch size is 500 for SST-2/QNLI and 1000 for QQP/MNLI. To construct an update with desired mini-batch size, we accumulate the gradients of multiple micro-batches.   We choose $\delta = 10^{-5}$ for QNLI/SST-2 and $\delta =10^{-6}$ for QQP/MNLI. The privacy parameter $\epsilon$ is chosen from $\{1, 2, 4, 6, 8\}$.  The number of training epochs is 50 for $\epsilon>2$ and $20$ for $\epsilon\leq 2$. We run all experiments 5 times with different random seeds and report the average.

\textbf{Results.}   The prediction accuracy of RGP and other baselines is presented in Table~\ref{tbl:tbl_bert}. The results with varying DP parameter $\epsilon$ is plotted in Figure~\ref{fig:fig_bert}.  When trained without privacy guarantee, RGP (N.P.) achieves  test accuracy comparable with fine-tuning the full model.  When trained with differential privacy, RGP achieves the best performance. Its accuracy loss compared to non-private baselines is within $5\%$. The performance of RGP-random is worse than that of RGP because the random subspace does not capture gradient information as effectively as the subspace of historical updates.  DP-SGD achieves the worst performance because high-dimensional noise overwhelms the useful signal in gradients.  We note that DP-SGD runs the lowest because it needs to compute and store 110M floating-point numbers for each individual gradient.

 \begin{figure*}
    \centering
  \includegraphics[width=0.9\linewidth]{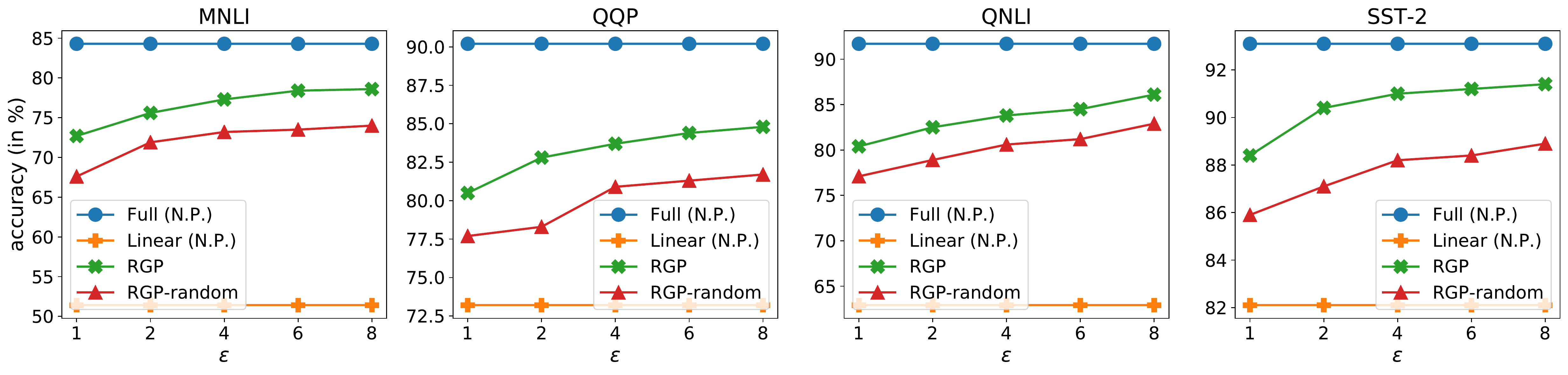}
  \caption{Prediction accuracy of BERT on downstream tasks with varying $\epsilon$. For MNLI, we plot the average score of two test datasets.  }
  \label{fig:fig_bert}
\end{figure*}

\begin{table}
\small
\centering
    \caption{Prediction accuracy of BERT on downstream tasks (in \%). For DP-SGD, RGP, and RGP-random, a same $\epsilon=8$ is used.} 
    
        \begin{tabular}{l|l|l|l|l|l}
        \hline
        \hline
            Method         & MNLI &  QQP  & QNLI  &  SST-2  &  Avg.  \\
            \hline
              Full (N.P.) &     84.8/83.7         &   90.2       &    91.6   &  93.4   &  88.7 \\\cline{1-6}
             Linear (N.P.) &     51.9/50.8         &    73.2     &    63.0     &    82.1   &    64.2     \\\cline{1-6}
                RGP (N.P.)  &  83.6/83.2    &   89.3    &   91.3    &   92.9   & 88.1  \\\cline{1-6}
            DP-SGD\tablefootnote{As shown in \citet{li2021large}, DP-SGD performs better when large batchsizes and full precision are used.}  &     54.6/53.4           &     74.5    &     63.6    &   82.3      &  65.7 \\\cline{1-6}
            RGP-random  &      74.6/73.3        &    81.7   &      82.1 &    87.8    & 79.9 \\\cline{1-6}
               RGP\tablefootnote{The performance of RGP is also better in the above setup. More details are in  \url{https://github.com/dayu11/Differentially-Private-Deep-Learning}. }  &       79.1/78.0       &     84.8    &     86.2  &   91.5  &     83.9
            \\\hline \hline
        \end{tabular}
    \label{tbl:tbl_bert}
\end{table}

\subsection{Defense Against Membership Inference Attack}\label{subsec:exp_mi}

\textbf{Setup.} We use membership inference (MI) attack to empirically evaluate the privacy risk of models trained with/without RGP. Following the membership decision in \citet{sablayrolles2019white}, we predict a sample from the training data if its loss value is smaller than a chosen threshold.  To evaluate the MI success rate, we construct a \emph{MI dataset}, which consists of the same number of training and test samples. Specifically, the MI dataset contains the whole test set and a random subset of the training set. We further divide the MI dataset evenly into two subsets. One is used to find the optimal loss threshold and the other one is used to evaluate the final attack success rate.

\textbf{Results.}
The MI success rates  are presented in Table~\ref{tbl:mi_bert}. For MNLI, QQP, QNLI, and SST-2 datasets, we conduct MI attacks on fine-tuned BERT\textsubscript{BASE} models. For SVHN and CIFAR10 datasets, we conduct MI attacks on trained WRN28-4 models. The MI attack on the models trained with RGP ($\epsilon=8$) is no better than random guessing ($50\%$ success rate), which empirically demonstrate the effectiveness of RGP in protecting privacy. Moreover, interestingly, the models trained with low-rank reparametrization alone also achieve much lower MI success rate than the fully trained model, which indicates the benefit of low-rank reparametrization in terms of privacy protection.

\section{Related Work}

\begin{table}

\renewcommand{\arraystretch}{1.3}
\centering
    \caption{Success rates of membership inference attack  against fine-tuned BERT models (in \%). The closer to 50, the better.} \label{tbl:mi_bert}
    \begin{adjustbox}{max width=0.45\textwidth}
        \begin{tabular}{l|l|l|l|l|l|l}
        \hline
        \hline
            Method         & MNLI &  QQP  & QNLI  &  SST-2   & SVHN & CIFAR10  \\
            \hline
              Full (N.P.) &        60.3      &      56.1    &   55.8   &  57.7  & 56.4 & 58.1  \\\cline{1-7}
                RGP (N.P.)  &      52.3         &   51.5    &    51.8   &   52.6 & 52.8 & 53.3 \\\cline{1-7}
                RGP ($\epsilon=8$)  &      49.9         &   50.0      &   50.4   &   50.1 & 50.1 & 50.3  \\   
           \hline \hline
        \end{tabular}
    \end{adjustbox}
\end{table}

Differentially private learning has a poor dimensional dependency, i.e., the utility degrades dramatically when the model dimension gets large.  In the high-dimensional setting, related works usually assume the sparse structure  \citep{thakurta2013differentially, talwar2015nearly, wang2019sparse, wang2019differentially, cai2019cost} or specific problem structure \cite{chen2020locally,zheng2020locally}. However, these assumptions or specific structures do not hold for the gradient of deep neural networks. Here we emphasize the difference from our low-rank assumption. For the sparsity assumption, the bases are canonical and not private while for the low-rank assumption, it is ``sparse'' under certain bases but the bases are unknown and private. Hence the previous algorithms for sparsity cannot apply here.

Very recently, several works \citep{zhou2020bypassing,kairouz2020dimension, yu2021do} exploit the  redundancy of gradients of samples and suggest projecting the gradients into a low dimensional subspace that is identified by some public data points or historical gradients, in order to reduce the  noise effect when training large models. However, they all require storing and clipping whole individual gradients and hence are hard to train extremely large models.  Our work is orthogonal with theirs, i.e., we exploit the low-rank property of the gradient of each weight matrix, which truly breaks the  barrier of applying DP in large models. 

Another recent approach of training non-convex models with differential privacy is based on the  knowledge transfer of machine learning models \emph{Private Aggregation of Teacher Ensembles (PATE)} \citep{papernot2016semi, papernot2018scalable, jordon2019pate}. They first train independent teacher models on disjoint shards of private data and then tune a student model with privacy by distilling noisy predictions of teacher models on some public samples, whose performance suffers from the data splitting \cite{yu2021do}. It is not clear how to apply PATE to train large language models like BERT. In contrast, our algorithms do not require public data and can be used in different settings with little change.

The phenomenon that the gradients of deep models live on a very low dimensional manifold  has been widely observed \citep{gur2018gradient, vogels2019powersgd, gooneratne2020low, li2020hessian, martin2018implicit, li2018algorithmic}. People have also used this fact to compress the gradient with low-rank approximation  in the distributed optimization scenario \citep{yurtsever2017sketchy, wang2018atomo, karimireddy2019error, vogels2019powersgd}.

\section{Conclusion}
In this paper, we present the reparametrized gradient perturbation (RGP) for applying DP on large models. The key design of RGP exploits two  properties of gradients in  deep neural network, which are 1) the gradient of each weight matrix is of low stable rank, 2) the principal components of historical gradients align well with that of the current gradient.  We also justify the designs with both theoretical and empirical evidence. Thanks to RGP, we are able to train BERT on several downstream tasks with DP guarantee and achieve small accuracy loss. 

\vspace{-1mm}

\section*{Acknowledgements} 

Jian Yin is supported by NSFC (U1711262,  U1711261, U1811264, U1811261, U1911203, U2001211), Guangdong Basic and Applied Basic Research Foundation (2019B1515130001),  Key R\&D Program of Guangdong Province (2018B010107005).

\newpage

\bibliography{general_dl}
\bibliographystyle{icml2021}

\clearpage

\begin{appendix}

\section{Preliminary on Differential Privacy} \label{app:sec:preliminary}

Differential privacy (DP) \cite{dwork2006calibrating,dwork2014algorithmic} is widely recognized as a gold standard of privacy protection due to its mathematical rigor. It controls the maximum influence that any individual sample can produce. The definition of $(\epsilon,\delta)$-DP is given in Definition~\ref{def:dp}.

\begin{definition}[$(\epsilon,\delta)$-DP]
\label{def:dp}
A randomized mechanism $\mathcal{M}$  guarantees $(\epsilon,\delta)$-differential privacy if for any two neighboring input datasets $\sD\sim \sD^{'}$ and for any subset of outputs $\sS$ it holds that $\text{Pr}[\mathcal{M}(\sD)\in \sS]\leq e^{\epsilon}\text{Pr}[\mathcal{M}(\sD^{'})\in \sS]+\delta$.
\end{definition}

Two datasets are said to be neighboring datasets if they only differ in a single sample. When being applied to learning problems, DP requires the  learned models on neighboring datasets have approximately indistinguishable distributions.

\section{Missing Proofs} \label{app:sec:proof}

\subsection{Missing Proofs in Section \ref{sec:lrk}}
\label{apd:subsec:proof_sec2}
\gradlr*
\begin{proof}
The proof is based on the chain rule of back-propagation. Since the reparametrization does not  change the forward and backward signals, we assume the layer inputs are $\sD=\{\vx_{i}\}_{i=1}^{m}$, the corresponding outputs are $\{\vh_i\}_{i=1}^{m}$ with $\vh_i = \mW \vx_i$ and the backward signals on the layer output are $\{\partial \vh_i\}_{i=1}^{m}$. By back-propagation, we have 
\begin{flalign*}
&\partial \mW = \sum_{\vx_{i}\in \sD} (\partial \vh_i) \vx_i^T, \\
&\partial \mL =\sum_{\vx_{i}\in \sD}\partial \vh_i (\mR \vx_i)^T,\;\; \partial \mR =\sum_{\vx_{i}\in \sD} (\mL^{T}\partial \vh_i) \vx_i^T.
\end{flalign*}
Proof is completed by the multiplication associativity.
\end{proof}

\corogradlr*

\begin{proof}
If the columns of $\mL$ and the rows of $\mR$ are orthonormal, the projection of $\partial \mW$ onto  $\mL$ and $\mR$ is defined as,
\begin{flalign}
\mL\mL^T (\partial \mW) + (\partial \mW) \mR^T\mR - \mL\mL^T (\partial \mW)\mR^T\mR. 
\end{flalign}
Substituting the above formula with $\partial \mL = (\partial \mW) \mR^T$ and $\partial \mR = \mL^T (\partial \mW)$ in Theorem \ref{thm:grad_lr}, completes the proof.
\end{proof}

\subsection{Missing Proofs in Section \ref{sec:dp_learning_lrk}}
\label{apd:subsec:proof_sec3}

\privacy*

\begin{proof}
Although RGP releases projected gradient instead of releasing the whole gradient as in \citet{abadi2016deep}, moments accountant is still applicable because it applies to vectorized  function output. 

Moments accountant tracks a bound on the moments of the privacy loss random variable, which is built on the ratio of the probability density functions of the output distributions of two neighboring datasets.  \citet{abadi2016deep} show the log moments of the privacy loss random variable composes linearly. Therefore one can compute the overall privacy cost by adding the log moments at every update. When the training is done, moments accountant  casts the accumulated log moments into $(\epsilon,\delta)$-DP via tail bound. Detailed proof can be found in Appendix B of \citet{abadi2016deep}.

\end{proof}

\subsection{Missing Proofs in Section \ref{sec:grad_property}}
\label{apd:subsec:proof_sec4}
\gradalign*
\begin{proof}
We can compute the gradient at step $t$ 
\begin{flalign*}
\partial \mW_t &= \frac{1}{n}\sum_{i=1}^n (\mW_t \vx_i - \vy_i) \vx_i^T.
\end{flalign*}
Given the gradient descent update \eqref{eq:gd}, we can compute the gradient at $\mW_{t+1}$ as follows
\begin{flalign*}
\partial \mW_{t+1}
&=  \frac{1}{n}\sum_{i=1}^n ((\mW_t - \eta \cdot\partial\mW_t)\vx_i - \vy_i) \vx_i^T\\
&= \frac{1}{n}\sum_{i=1}^n (\mW_t\vx_i - \vy_i) \vx_i^T  - \eta\cdot \partial\mW_t \sum_{i=1}^n \vx_i\vx_i^T \\
& = \partial \mW_t \left (\mI - \eta \sum_{i=1}^n \vx_i\vx_i^T\right ).
\end{flalign*}
Hence we have $\partial \mW_t = \partial \mW_0 \left (\mI - \eta \sum_{i=1}^n \vx_i\vx_i^T\right )^t$. The $\partial \mW_t$ lives in the same subspace for all $t\ge 1$ as they have the same row/column spaces.
\end{proof}

\section{Additional Experiments}\label{app:sec:add-exp}
 \begin{figure*} [t]
    \centering
  \includegraphics[width=0.9\linewidth]{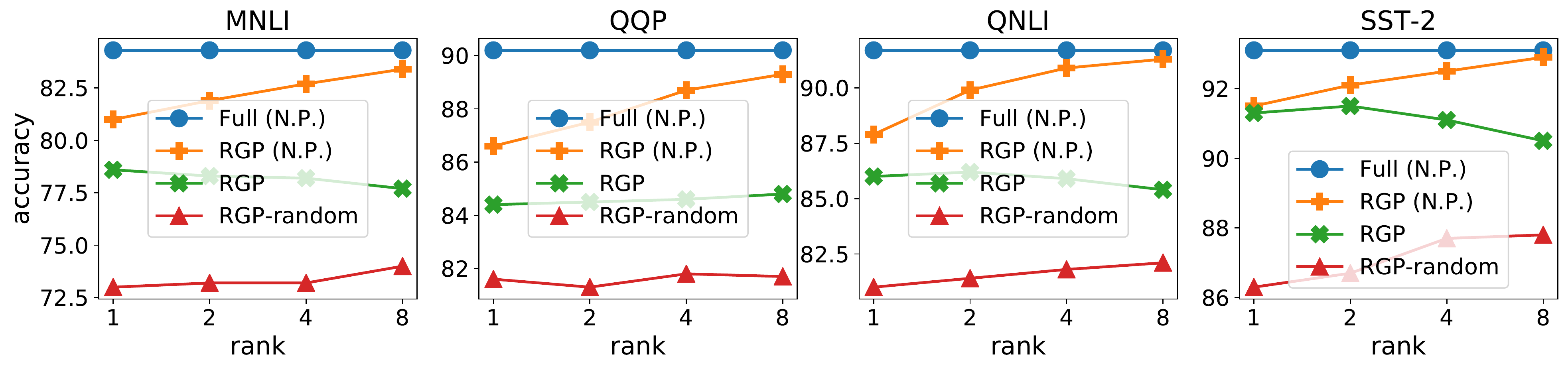}
  \caption{Prediction accuracy of BERT on four downstream tasks (in \%) with difference choices of reparametrization rank. We plot the average score of two test datasets for MNLI. }
  \label{fig:fig_bert_rank}
\end{figure*}
We present some ablation studies in this section to verify the effect of residual weight and reparametrization rank.  In Section~\ref{subsec:apd_rank}, we try RGP with difference rank choices. In Section~\ref{subsec:apd_residual}, we give a variant of RGP that simply discards the residual weight.

\subsection{On the Influence of Different Rank Choices} \label{subsec:apd_rank}

We present the results (see Figure \ref{fig:fig_bert_rank}) with different choices of reparametrization rank.  We consider four algorithms. The first one is fine-tuning the full model that serves as the baseline. The second one is RGP (N.P.) that trains the model with reparametrization but without gradient clipping or adding noise. The third one is RGP (Algorithm~\ref{alg:dp_lrk_repara}) and the  last one is RGP-random, which uses random orthogonal vectors as gradient-carrier matrices. The privacy parameter $\epsilon$ is $8$ and other settings are the same as those in Section~\ref{sec:exp}. The results are plotted in Figure~\ref{fig:fig_bert_rank}.  When the models are trained without noise, increasing the reparametrization rank makes the performance of RGP (N.P.) approach the performance of baseline. When the models are trained with privacy guarantee, increasing the rank sometimes decreases the performance because a larger rank induces more trainable parameters and hence higher noise dimension.

\subsection{On the Importance of Residual Weight}\label{subsec:apd_residual}

Recall that our reparametrization scheme reparametrizes the weight matrix as follows:

\begin{flalign} 
\mW \rightarrow \mL \mR + \tilde{\mW}.{stop\_gradient()}. \label{eq:apd_repara}
\end{flalign}

We have shown that the residual weight $\tilde{\mW}$ keeps the forward/backward signals unchanged and makes the gradients of $\mL$ and $\mR$ naturally connected with the original gradient. To empirically examine the effect of $\tilde{\mW}$, we test the following scheme:

\begin{flalign} 
\mW \rightarrow \mL \mR. \label{eq:apd_repara_nores}
\end{flalign}

 \begin{figure}
    \centering
  \includegraphics[width=0.9\linewidth]{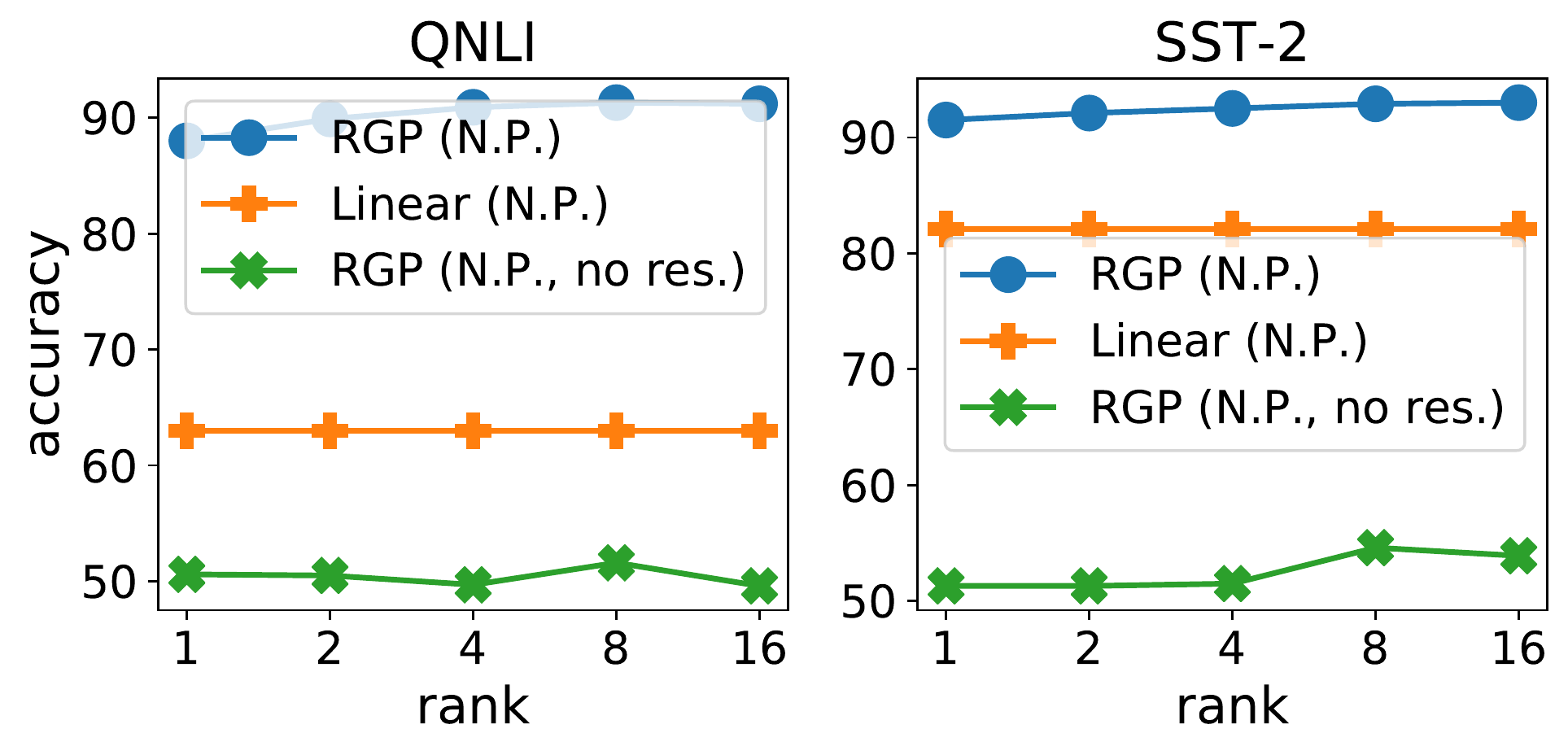}
  \caption{Prediction accuracy of BERT on two downstream tasks. All methods are trained without privacy guarantee.   }
  \label{fig:fig_bert_residual}
\end{figure}

We still use the historical update to generate $\mL$ and $\mR$. Other settings are the same as those in Section~\ref{sec:exp}. The results on two downstream tasks of the BERT model are presented in Figure~\ref{fig:fig_bert_residual}. Without residual weight, the model learns almost nothing from the re-constructed update and the final accuracy is close to the accuracy at initialization.

\end{appendix}

\end{document}